\documentclass{article}
\usepackage{filecontents,lipsum}
\usepackage{amsthm}
\usepackage{comment}

\usepackage{bbm}
\newtheorem{remark}{Remark}
\newtheorem{lemma}{Lemma}

\newcommand{\dist}{{\rm dist}}
\usepackage[linesnumbered,ruled]{algorithm2e}
\usepackage{spconf,amsmath,graphicx}
\usepackage{lipsum}
\usepackage{graphicx}
 

\title{Training Generative Networks Using Random Discriminators}
\name{$^{\star}$Babak Barazandeh \qquad $^{\dagger}$Meisam Razaviyayn\qquad $^{\ddagger}$Maziar Sanjabi }


\address{ $^{\star}$$^{\dagger}$$^{\ddagger}$University of Southern California\\
\{barazand, razaviya,  sanjabi\}@usc.edu}


%
%
\usepackage{cite}

\usepackage{amsmath}
\usepackage{mathtools}
\usepackage{cite}

\usepackage{amsmath,amssymb,amsfonts}
\usepackage{algorithmic}
\usepackage{graphicx}
\usepackage{textcomp}
\usepackage{float}
\usepackage[utf8]{inputenc} 
\usepackage[T1]{fontenc}    
\usepackage{nth}
\usepackage{hyperref}       
\usepackage{url}            
\usepackage{booktabs}       
\usepackage{amsfonts}       
\usepackage{nicefrac}       
\usepackage{microtype}      
\usepackage{amsbsy} 
\usepackage{amsmath}

\newtheorem*{proof*}{Proof}

\def\BibTeX{{\rm B\kern-.05em{\sc i\kern-.025em b}\kern-.08em
		T\kern-.1667em\lower.7ex\hbox{E}\kern-.125emX}}

\usepackage{eso-pic}
\AddToShipoutPicture*{\footnotesize \sffamily\raisebox{1cm}{\hspace{1cm} \parbox{\textwidth}{Copyright 2019 IEEE. Published in the IEEE 2019 Data Science Workshop(DSW 2019), scheduled for June 2-5, 2019 in Minneapolis, Minnesota, USA. Personal use of this material is permitted. However, permission to reprint/republish this material for advertising or promotional purposes or for creating new collective works for resale or redistribution to servers or lists, or to reuse any copyrighted component of this work in other works, must be obtained from the IEEE. Contact: Manager, Copyrights and Permissions / IEEE Service Center / 445 Hoes Lane / P.O. Box 1331 / Piscataway, NJ 08855-1331, USA. Telephone: + Intl. 908-562-3966}}}
 
\begin{document}
		\setlength{\abovedisplayskip}{1.5pt}
	\setlength{\belowdisplayskip}{1.5pt}
%
\maketitle
\begin{abstract}
In recent years, Generative Adversarial Networks (GANs) have drawn a lot of attentions for learning the underlying distribution of  data in various applications. Despite their wide applicability, training GANs is notoriously difficult. This difficulty is due to the min-max nature of the resulting  optimization problem and the lack of proper tools of solving general (non-convex, non-concave) min-max optimization problems. In this paper, we try to alleviate this problem by proposing a new generative network that relies on the use of random discriminators instead of adversarial design. This design helps us to avoid the min-max formulation and leads to an optimization problem that is stable and could be solved efficiently. The performance of the proposed method is evaluated using handwritten digits (MNIST) and Fashion products (Fashion-MNIST) data sets. While the resulting images are not as sharp as adversarial training, the use of random discriminator  leads to a much faster algorithm as compared to the adversarial counterpart. This observation, at the minimum, illustrates the potential of the random discriminator approach for warm-start in training GANs. 
\end{abstract}
\begin{keywords}
Generative Adversarial Networks, Deep Neural Network, Randomized Learning, Non-convex Min-Max Optimization
\end{keywords}
\section{Introduction}
Generative Adversarial Networks (GANs) \cite{goodfellow2014generative} have been relatively successful in learning underlying distribution of data, especially in application such as image generation. GANs aims to find the mapping that matches a known distribution to the underlying distribution of the data. The way they perform this task is by projecting the inputs to a higher dimension using Neural Networks \cite{sanjabi2018convergence} and then minimizing the distance between the mapped distribution and the unknown distribution in the projected space. To find the optimal network, \cite{goodfellow2014generative} proposed using Jensen-Shannon divergence\cite{lin1991divergence} for measuring the distance between projected distribution and the data distribution. Later on, \cite{nowozin2016f} generalized the idea by using the f-divergence as the measure. \cite{mao2017least} and \cite{zhao2016energy} proposed using least square and absolute deviation as the measure. 

The most recent works proposed using  Wasserstein distance and  Maximum Mean Discrepancy (MMD) as the distance measure\cite{arjovsky2017wasserstein,gulrajani2017improved, binkowski2018demystifying}. Unlike Jensen-Shannon divergence, the recent measures are continuous and almost everywhere differentiable. The common thread between all these approaches is that the problem is usually formulated as a game between two agents, i.e. generator and discriminator. Generator's role is to generate samples as close as possible to real data and discriminator is responsible for distinguishing between real data and the generated samples. The result is a non-convex min-max game which is difficult to solve. The learning process, which should solve the resulting non-convex min-max game, is hard to tackle, due to many factors such as using discontinuous \cite{arjovsky2017wasserstein} or non-smooth \cite{sanjabi2018convergence} measure. In addition to these factors, the fact that all of these models try to learn the mapping transformation adversarially makes the training unstable. 
Adding regularization or starting from a good initial point is one approach to overcome these problems \cite{sanjabi2018convergence}. However, for most problems finding a good initial point might be as hard as solving the problem itself.

Randomization has shown promising improvement in machine learning algorithms \cite{barazandeh2018behavior,sun2018random}. As the result, to prevent over-mentioned issues,  we propose learning underlying distribution of data not through adversarial player but through a random projection. This random projection not only decreases the computation time by removing the optimization steps needed for most of the discriminator's role, but also leads to a more stable optimization problem. The proposed method has the state of the art performance for simple datasets such as MNIST and Fashion-MNIST. 

\section{Problem Formulation}
Let $x\in \mathbb{R}^d$ be a random variable with distribution $P_x$ representing the real data; and $z$ be a random variable representing a known distribution such as standard Gaussian. Our goal is to find a function or a neural network $G(\cdot)$ such  that $G(z)$ has a similar distribution to the real data distribution $P_x$. Therefore,  our  objective is to solve the following optimization problem
\begin{equation}
\label{eq:OriginalProblems}
\min_{G} \;\; \dist(P_{G(z)}, P_x),
\end{equation}
where $P_{G(z)}$ is the distribution of $G(z)$ and $\dist(\cdot,\cdot)$ is a distance measure between the two distributions.

A natural question to ask is about what distance metric to use. The original paper of Goodfellow \cite{goodfellow2014generative} suggests the use of Jensen--Shannon divergence. However, as mentioned in \cite{arjovsky2017wasserstein}, this divergence is not continuous. Therefore, \cite{arjovsky2017wasserstein,sanjabi2018convergence} suggest to use the optimal transport distance. In what follows, we first review this distance and then discuss our methodology for solving~\eqref{eq:OriginalProblems}.

\section{Optimal Transport Distance }
Let $p$ and $q$ be two discrete distributions  taking $m$ different values/states. Thus   the distributions $p$ and $q$ can be represented by $m$-dimensional vectors  $(p_{1},\ldots, p_m)$ and $(q_1,\ldots,q_m)$ . The optimal transport distance is defined as the minimum amount of work needs to be done for transporting distribution $p$ to $q$ (and vice versa). Let $\pi_{i,j}$ be the amount of mass moved from state $i$ to state $j$; and $c_{ij}$ represent the per-unit cost of  this move. 
Then the optimal transport distance  between the two distributions $p$ and $q$ is defined as \cite{villani2009optimal}:
\begin{equation}
\label{eq:EarthMoverPrimal}
\begin{split}
\textrm{dist}(p, q) = \min_{\pi \geq 0} \quad  &\sum_{i=1}^m\sum_{j=1}^m c_{ij} \pi_{ij}\\
\textrm{s.t.} \quad & \sum_{j=1}^m \pi_{ij} = p_i,\; \forall i = 1,\ldots,m\\
&\sum_{i=1}^m \pi_{ij} = q_j, \;\forall j = 1,\ldots,m,
\end{split}
\end{equation}
where the constrains guarantee that the mapping $\pi$ is a valid transport.  In practice, a popular approach is to solve the dual problem. It is not hard to see that the dual of the  optimization problem \eqref{eq:EarthMoverPrimal} can be written as
\begin{equation}
\label{eq:EarthMoverDual}
\begin{split}
\textrm{dist}(p, q) = \max_{\lambda,\gamma} \quad  &\sum_{i=1}^m\gamma_i p_i + \sum_{j=1}^m \lambda_j q_j\\
\textrm{s.t.} \quad & \lambda_j + \gamma_i \leq c_{ij},\;\;\forall i,j=1,\ldots, m.
\end{split}
\end{equation}

When $c$ is a proper distance, this dual variable should satisfy $\lambda = -\gamma$ \cite{villani2009optimal}. In practice, since the dimension $m$ is large and estimating $p$ and $q$ accurately is not possible, we parameterize the dual  variable with a neural network and solve the dual optimization problem by training two  neural networks simultaneously \cite{arjovsky2017wasserstein}. However, this approach leads to a non-convex min-max optimization problem. Unlike special cases such as convex-concave set-up \cite{juditsky2016solving},  there is no algorithm to date in the literature which can find even an $\epsilon$-stationary point in the general non-convex setting; see \cite{ nouiehed2019solving } and the references therein. Therefore, training generative adversarial networks (GANs) can become notoriously  difficult in practice and may require significant tuning of training parameters. A natural solution is to not parameterize the dual function and instead solve \eqref{eq:EarthMoverPrimal} or \eqref{eq:EarthMoverDual} directly which leads to a convex reformulation. However, as mentioned earlier, since the dimension $m$ is large, approximating $p$ and $q$  is statistically not possible. Moreover, the distance in the original feature domain may not reflect the actual distance between the distributions. Thus, we suggest an alternative formulation in the next section.
 

\section{Training in different feature domain}
In many applications, the closeness of samples in the original  feature domain does not reflect the actual similarity between the samples. For example, two images of the same object may have a large difference when the distance is computed in the pixel domain. Therefore, other mappings of the features, such as features obtained by  Convolutional Neural Network (CNN) may be used to extract meaningful features from samples \cite{o2015introduction}.  

Let $\mathcal{D} = \{D_1,D_2,\ldots,D_K\}$  be a collection of meaningful features we are interested in. In other words, each function $D\in \mathcal{D} $ is a mapping from our original feature domain to the domain of interest, i.e.,  $D_k(\cdot):\mathbb{R}^d \mapsto \mathbb{R}^{d'}, \forall k=1,\ldots,K$. Then, instead of solving~\eqref{eq:OriginalProblems}, one might be interested in solving the following optimization problem
\begin{equation}
\label{eq:DistwithD}
\min_{G} \;\; \sum_{k=1}^K w_k\dist(P_{D_k(G(z))}, P_{D_k(x)}),
\end{equation}
where $P_{D_k(G(z))}$ represents the distribution of the random variable $D_k(G(z))$;  $P_{D_k(x)}$ is the distribution of $D_k(x)$; and $w_k$ is a weight coefficient indicating the importance of the $k$-th feature~$D_k$. 

In the general setting, we may have uncountable number of mappings $D_k$. Thus, by defining a measure on the set $\mathcal{D}$, we can generalize \eqref{eq:DistwithD} to the following optimization problem

\begin{equation}
\label{eq:DistwithDExp}
\min_{G} \;\; \mathbb{E}_D \bigg[\dist\left(P_{D(G(z))}, P_{D(x)}\right)\bigg].
\end{equation}

\begin{remark}
We use the notation $D$  since the function $D$ plays the role of a \textit{discriminator} in the Generative Adversarial Learning (GANs) context. 
\end{remark} 

Plugging ~\eqref{eq:EarthMoverPrimal} in the equation \eqref{eq:DistwithDExp} leads to the optimization problem

\begin{equation}
\label{eq:DistwithDExpExpanded}
\min_{G} \;\; \mathbb{E}_D \left[
\begin{array}{ll}
 \displaystyle{\max_{(\lambda,\gamma) \in \mathcal{C} }} \;  &\displaystyle{\sum_{i=1}^m}\gamma_i P_{D(G(z))}^i + \displaystyle{\sum_{j=1}^m} \lambda_j P_{D(x)}^j\\
 \vspace{0.2cm}\\
\;\;\;\textrm{s.t.} \quad & \lambda_j + \gamma_i \leq c_{ij},\;\;\forall i,j.
\end{array}
\right],
\end{equation}
where $\mathcal{C} = \{(\lambda, \gamma) | \;  \lambda_{i} + \gamma_{j} \leq c_{i,j}, \; \forall i,j\}$.\\

Unfortunately, in practice, we do not have access to the actual values of the distributions $P_{D(x)}$ and $P_{D(G(z))}$. However, we can estimate them using a batch of generated and real samples. The following simple lemma motivates the use of a natural surrogate function. 

\begin{lemma}
\label{lem:upperbound}
Let $p$ and $q$ be two discrete distributions with $p = (p_1,\ldots,p_m)$ and $q= (q_1,\ldots,q_m)$. Let $x\in \mathbb{R}^m$ and $y\in \mathbb{R}^m$ be the corresponding one-hot encoded random variables, i.e., $P(x = e_i) = p_i,\forall i=1,\ldots,m$ and $P(y = e_i) = q_i,\forall i=1,\ldots,m$, where $e_i$ is the $i$-th standard basis. Assume further that $\dist(p,q)$ is the optimal transport distance between $p$ and $q$ defined in \eqref{eq:EarthMoverPrimal}. Let $\hat{p}^n $ and $\hat{q}^n$ be the natural unbiased estimator of $p$ and $q$ based on $n$ i.i.d. samples. In other words, $\hat{p}^n = \frac{1}{n}\sum_{\ell=1}^n x_\ell$ and $\hat{q}^n = \frac{1}{n}\sum_{\ell=1}^n y_\ell$, where $x_\ell$ and $ y_\ell, \ell = 1,\ldots, n,$ are i.i.d samples obtained from distributions $p$ and $q$, respectively. Then,\\
\[
\mathbb{E}\left[\dist(\hat{p}^{n+1},\hat{q}^{n+1})\right] \leq \mathbb{E}\left[\dist(\hat{p}^n,\hat{q}^n)\right].
\]
Moreover, 
\[
\lim_{n\rightarrow \infty} \dist(\hat{p}^{n},\hat{q}^{n}) =  \dist({p},{q}), \;almost \;surely.
\]
\end{lemma}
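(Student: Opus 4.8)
The plan is to treat the two assertions separately: the monotonicity inequality rests on convexity of the optimal-transport value together with a leave-one-out averaging identity, while the almost-sure limit is a consequence of the strong law of large numbers and continuity of $\dist$.

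First I would record the structural fact that $\dist(p,q)$ is jointly convex in the pair $(p,q)$. This is immediate from the primal formulation~\eqref{eq:EarthMoverPrimal}, in which the distance is the optimal value of a linear program whose right-hand side is precisely $(p,q)$. Indeed, if $(p,q)=\theta(p^1,q^1)+(1-\theta)(p^2,q^2)$ and $\pi^1,\pi^2$ are optimal transport plans for the two endpoints, then $\theta\pi^1+(1-\theta)\pi^2$ is feasible for $(p,q)$, so its cost $\theta\,\dist(p^1,q^1)+(1-\theta)\,\dist(p^2,q^2)$ is an upper bound for $\dist(p,q)$. Hence $\dist$ is convex, and being the value of a parametric linear program it is in fact piecewise linear and therefore continuous on the simplex.

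For the monotonicity claim, the key idea is a leave-one-out decomposition. Given $n+1$ samples $x_1,\dots,x_{n+1}$, for each index $k$ let $\hat{p}^{(k)}=\frac{1}{n}\sum_{\ell\neq k}x_\ell$ be the empirical distribution omitting the $k$-th sample, and define $\hat{q}^{(k)}$ analogously. A short counting argument shows that each sample is dropped exactly once and kept $n$ times as $k$ ranges over $\{1,\dots,n+1\}$, so that $\hat{p}^{n+1}=\frac{1}{n+1}\sum_{k=1}^{n+1}\hat{p}^{(k)}$ and likewise for $\hat{q}^{n+1}$. Applying joint convexity through Jensen's inequality then gives
\[
\dist(\hat{p}^{n+1},\hat{q}^{n+1}) \leq \frac{1}{n+1}\sum_{k=1}^{n+1}\dist(\hat{p}^{(k)},\hat{q}^{(k)}).
\]
Taking expectations and observing that each pair $(\hat{p}^{(k)},\hat{q}^{(k)})$ is built from $n$ i.i.d. samples, and hence has the same joint law as $(\hat{p}^{n},\hat{q}^{n})$, collapses the right-hand side to $\mathbb{E}[\dist(\hat{p}^{n},\hat{q}^{n})]$, which is exactly the asserted inequality.

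For the almost-sure limit, I would apply the strong law of large numbers coordinatewise: since $\hat{p}^{n}=\frac{1}{n}\sum_\ell x_\ell$ averages i.i.d. one-hot vectors with mean $p$, we get $\hat{p}^{n}\to p$ and $\hat{q}^{n}\to q$ almost surely, and combining this with the continuity of $\dist$ established above yields $\dist(\hat{p}^{n},\hat{q}^{n})\to\dist(p,q)$ almost surely. I expect the only delicate point to be the bookkeeping in the leave-one-out step, namely verifying that the average of the $\hat{p}^{(k)}$ reproduces $\hat{p}^{n+1}$ exactly and that the identical-distribution reduction is valid; once those are in place, the rest follows directly from convexity, Jensen's inequality, and the law of large numbers.
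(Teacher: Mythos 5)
Your proposal is correct and is essentially the paper's own argument: the same leave-one-out decomposition $\hat{p}^{n+1}=\frac{1}{n+1}\sum_{k}\hat{p}^{(k)}$, the same appeal to convexity of the optimal-transport value in its marginals, the same identical-distribution step to collapse the average over the $n+1$ leave-one-out pairs, and the same law-of-large-numbers-plus-continuity argument for the almost-sure limit. The only difference is presentational: you establish convexity from the primal LP (a mixture of feasible transport plans is feasible for the mixed marginals) and then invoke Jensen, whereas the paper runs the identical manipulation through the dual representation, where convexity enters as ``the max of a sum is at most the sum of the maxes''; your route also makes explicit the continuity of $\dist$ (as a polyhedral, piecewise-linear LP value), which the paper merely asserts.
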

\begin{proof}
The proof is similar to the standard proof in sample average approximation method; see \cite[Proposition 5.6]{shapiro2009lectures}.  Notice that,
\begin{equation}
\begin{aligned}
&\mathbb{E} \left[\textrm{dist}(\hat{p}^{n+1},\hat{q}^{n+1})\right] \\
= \;& \mathbb{E} \left[\displaystyle{\max_{(\lambda,\gamma) \in \cal{C}}} \;  
\displaystyle{\sum_{i=1}^m}\gamma_i \hat{p}^{n+1}_i + \displaystyle{\sum_{j=1}^m} \lambda_j \hat{q}^{n+1}_j\right]  \\ 
= \; &\mathbb{E} \left[\displaystyle{\max_{(\lambda,\gamma) \in \cal{C}}} \;  \langle {\hat{p}^{n+1}},{\gamma} \rangle + \langle {\hat{q}^{n+1}},{\lambda}\rangle\right] \\
=\;& \frac{1}{n+1}\mathbb{E} \left[\displaystyle{\max_{(\lambda,\gamma) \in \cal{C}}} 
 \langle\gamma, \sum_{\ell} x_{\ell} \rangle +  \langle\lambda, \sum_{\ell} y_{\ell} \rangle \right] \\
=\;& \frac{1}{n(n+1)} \mathbb{E} \left[\displaystyle{\max_{(\lambda,\gamma) \in \cal{C}}} \sum_{t = 1}^{n +1 } \langle\gamma, \sum_{\ell \neq t}x_{\ell}\rangle 
 +  \sum_{t = 1}^{n +1 } \langle \lambda, \sum_{\ell \neq t} y_{\ell}\rangle  \right]\\
 \leq \;& \frac{1}{n(n + 1)} \mathbb{E}\left[\sum_{t = 1}^{n + 1} \displaystyle{\max_{(\lambda,\gamma) \in \cal{C}}} \langle\gamma, \sum_{\ell \neq t}x_{\ell}\rangle  +  \langle\lambda, \sum_{\ell \neq t}y_{\ell}\rangle \right]  \\
 =\;& \frac{1}{(n + 1)} \sum_{t = 1}^{n + 1}\mathbb{E}\left[\frac{1}{n}\, \displaystyle{\max_{(\lambda,\gamma) \in \cal{C}}} \langle\gamma, \sum_{\ell \neq t}x_{\ell}\rangle  +  \langle\lambda, \sum_{\ell \neq t}y_{\ell}\rangle \right]  \\
 = \;&\mathbb{E}[\textrm{dist}(\hat{p}^{n},\hat{q}^{n})]. \nonumber
\end{aligned}
\end{equation}\\

The proof of the almost sure convergence follows directly from the facts that $\lim_{n\rightarrow \infty}  \hat{p}^n = p$, $\lim_{n\rightarrow \infty}  \hat{q}^n = q$, and the continuity of the distance function.
\end{proof}
 The above lemma suggests a natural upper-bound for the objective function in \eqref{eq:DistwithDExpExpanded}. More precisely, instead of solving \eqref{eq:DistwithDExpExpanded}, we can solve
 \begin{equation}
\label{eq:DistwithDExpExpanded2}
\min_{G} \;\; \mathbb{E} \left[
\begin{array}{ll}
 \displaystyle{\max_{(\lambda,\gamma)\in \mathcal{C}}} \;  &\displaystyle{\sum_{i=1}^m}\gamma_i \hat{P}_{D(G(z))}^i + \displaystyle{\sum_{j=1}^m} \lambda_j \hat{P}_{D(x)}^j\\
 \vspace{0.2cm}\\
\textrm{s.t.} \quad & \lambda_j + \gamma_i \leq c_{ij},\;\;\forall i,j
\end{array}
\right],
\end{equation}

\vspace{0.2cm}

\noindent where $\hat{P}_{D(G(z))}$ and $\hat{P}_{D(x)}$ are the unbiased estimators of ${P}_{D(G(z))}$ and ${P}_{D(x)}$ based on our i.i.d samples. Moreover, the expectation is taken with respect to both, the function $D$ as well as the batch of samples which is drawn for estimating the distributions. As we will see later, in practice it is easier to use the primal form for solving the inner problem in~\eqref{eq:DistwithDExpExpanded2}, i.e.,


\begin{equation}\nonumber
\min_{G} \;\; \mathbb{E} \left[
\begin{array}{ll}
\displaystyle{\min_{\pi \geq 0}} \;  &\displaystyle{\sum_{i=1}^m\sum_{j=1}^m} c_{ij} \pi_{ij}\\
\vspace{0.cm}\\
\textrm{s.t.} \; & \displaystyle{\sum_{j=1}^m} \pi_{ij} = \hat{P}_{D(G(z))}^i,\displaystyle{\sum_{i=1}^m} \pi_{ij} = \hat{P}_{D(x)}^j, \;\forall i,j
\end{array}
\right],
\end{equation}

\vspace{0.2cm}

To show the dependence of $c_{ij}$ to $G$, let us assume that our generator $G$ is generating the output $h(w,z)$ from the input $z$. Here $w$ represents the weights of the network needed to be learned. Moreover, in practice, the value  of $\hat{P}_{D(G(z))}^i$ is estimated by taking the average over all batch of data. Hence, by duplicating variables if necessary, we can re-write the above optimization problem as
\begin{equation}
\label{eq:min_min_gan}
\min_{w} \;\; \mathbb{E}_{z,x,D} \left[
\begin{array}{ll}
\displaystyle{\min_{\pi \geq 0}} \;  &\displaystyle{\sum_{i=1}^n\sum_{j=1}^n} \|D(h(w,z_{i})) - D(x_j)\|  \pi_{ij}\\
\vspace{0.cm}\\
\textrm{s.t.} \; & \pi \vec{1} = \frac{1}{n}, \; \; \pi^{T} \vec{1} = \frac{1}{n}
\end{array}
\right].
\end{equation}
Here, $n$ is the batch size and we ignored the entries of $\hat{P}_{D(G(z))}$ and $\hat{P}_{D(x)}$ that are zero. Notice that to obtain an algorithm with convergence guarantee for solving this optimization problem,  one can properly regularize the inner optimization problem to obtain unbiased estimates of the gradient of the objective function~\cite{nouiehed2019solving,sanjabi2018convergence}. However, in  this work, due to practical considerations, we suggest to \textit{approximately} solve the inner problem  and use the approximate solution for solving~\eqref{eq:min_min_gan}.

\noindent \textbf{Solving the inner-problem approximately.} In order to solve the inner problem in~\eqref{eq:min_min_gan}, we need to solve
\begin{equation}
\label{eq:OptimalAssignment}
\begin{split}
\displaystyle{\min_{\pi \geq 0}} \quad  &\displaystyle{\sum_{i=1}^n\sum_{j=1}^n} \|D(h(w,z_{i})) - D(x_j)\|  \pi_{ij}\\
\textrm{s.t.} \quad & \pi \vec{1} = \frac{1}{n}, \; \; \pi^{T} \vec{1} = \frac{1}{n}.  
\end{split}
\end{equation}

Notice that this problem is the classical \textit{optimal assignment problem} which can be solved using Hungarian method \cite{kuhn1955hungarian}, Auction algorithm \cite{bertsekas1988auction}, or many other methods proposed in the literature. Based on our observations, even the greedy method of assigning each column to the lowest unassigned row worked in our numerical experiments. The benefit of the greedy method is that it can be performed almost linearly in $m$ by the use of a proper hash function.   

Algorithm~\ref{alg:RP} summarizes our  proposed Generative Networks using Random Discriminator (GN-RD) algorithm  for solving~\eqref{eq:min_min_gan}.

\begin{algorithm} 
	\caption{Generative Networks using Random Discriminator (GN-RD)}
 	\label{alg:RP}
 	\SetKwInOut{Input}{Input}
 	\Input{ $w_{0}:$ Initialization for generator's parameter, $\alpha:$ Learning rate, $n:$ Batch size, $N_{Itr}$: Maximum iteration number }
	\For{$t =1:N_{\max} $}
	{           
	           Sample an i.i.d. batch of real data $ (x_1,\ldots,x_n)$ \\    
			   Sample an i.i.d. batch of noise $(z_1,\ldots,z_n)$\\
			   Create a random discriminator neural network $D$ with random weights \\
			   Solve \eqref{eq:OptimalAssignment} by finding the optimal assignment value between real data and generated sample \\
			   Update generator's parameter, $w_{t+1} = w_t - \alpha \nabla_{w} G(w_t) $

	}
    \SetKwInOut{Output}{Output}
	\Output{$G(w_{N_{\max}})$}

\end{algorithm}

\begin{remark}
The training approach in Algorithm~\ref{alg:RP} relies on two neural networks: the generative and the discriminator. Hence, Algorithm~\ref{alg:RP} can be viewed as a GANs training approach where we use a random discriminator  at each iteration of updating the generator. 
\end{remark}

\begin{remark}
The recent works \cite{li2018implicit,hoshen2018non} have  similarities  in terms of learning generative models through min-min formulation instead of min-max formulation. However, unlike their method, 1) our algorithm is based on mapping images via randomly generated discriminators; 2) In our analysis, we establish that this formulation leads to an upper-bound of the distance measure; 3) our algorithm is based on the use of optimal assignment, while the works \cite{li2018implicit,hoshen2018non} suggests a greedy matching, which is more difficult to understand and analyze.
\end{remark}

\section{Numerical Experiments}
In this section, we evaluate the performance of the proposed GN-RD algorithm for learning generative networks to create samples from MNIST ~\cite{lecun1998mnist} and Fashion-MNIST \cite{xiao2017fashion} datasets. As mentioned previously, the proposed algorithm does not require any optimization on the discriminator network and only needs randomly generated  discriminator to learn the underlying distribution of the data\footnote{All the experiments have been run on a machine with  single GeForce GTX 1050 Ti GPU.}. 

\subsection{Learning handwritten digits and fashion products}
In this section, we use GN-RD for generating samples from handwritten digits and Fashion-MNIST datasets. Each of these datasets contains 50K training samples. \\
\textbf{Architecture of the Neural Networks:}
The generator's Neural Network consists of two fully connected layer with 1024 and 6272 neurons. The output of the second fully connected layer is followed by two deconvolutional layers to generate the final $28\times 28$ image. 

The discriminator neural network has two convolution layers each followed by a max pool. The size of the both convolutional layers are 64. The last layer has been flatten to create the output. The design of both neural networks is summarized below: 
\begin{itemize}
  \item Generator: [FC(100, 1024), Leaky ReLU($\alpha = 0.2$),  FC(1024, 6272),  Leaky ReLU($\alpha = 0.2$), DECONV(64, kernel size = 4, stride = 2), Leaky ReLU(alpha = 0.2), DECONV(1, kernel size = 4, stride = 2), Sigmoid].
  \item Discriminator: [CONV(64, filter size = 5, stride = 1), Leaky ReLU(alpha = 0.2), Max Pool (kernel size = 2, stride = 2),  COVN(64, filter size = 5, stride = 1),  Max Pool (kernel size = 2, stride = 2), Flatting].
\end{itemize}
We have used originally proposed adversarial discriminator for  Wasserstein GAN  (WGAN) \cite{arjovsky2017wasserstein}, Wasserstein GAN with gradient penalty (WGAN-GP)\cite{gulrajani2017improved} \footnote{For WGAN and WGAN-GP implementation visit \url{https://github.com/igul222/improved_wgan_training}\label{code}} and Cram\'er GAN \cite{bellemare2017cramer}\footnote{For Cram\'er GAN implementation visit \url{https://github.com/jiamings/cramer-gan}}.

As mentioned in Algorithm \ref{alg:RP}, it is important to notice that unlike benchmark methods, the proposed method only optimizes the generator's parameters. However, at each iteration, weights in the convolutional layers of the discriminator are randomly generated from normal distribution.\\
\textbf{Hyper parameters:} We have used Adam with step size $0.001$ and $\beta_1 = 0.5$ and $\beta_2 = 0.9$ as the optimizer for our generator. The batch size is set to 100.

Fig.\ref{fig:MNSIT} shows the result of the generated digits and the corresponding inception score\cite{barratt2018note} using different benchmark methods. As seen from the figure, the proposed GN-RD is able to quickly learn the underlying distribution of the data and generate promising samples.

\begin{figure*}[ht]
\centering
\begin{tabular}{cccc}
\includegraphics[width=0.3\textwidth]{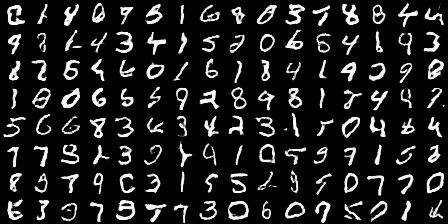} &
\includegraphics[width=0.3\textwidth]{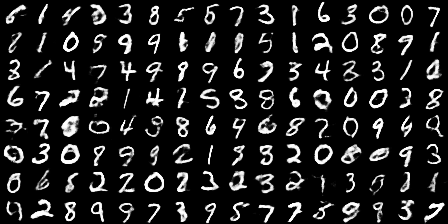} &
\includegraphics[width=0.3\textwidth]{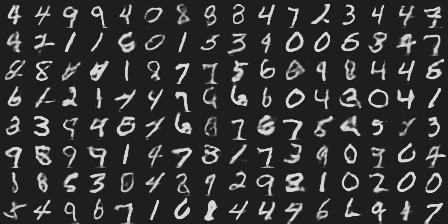} \\
\textbf{(a) WGAN}  & \textbf{(b)WGAN-GP} & \textbf{(c) Cramer GAN}  \\[6pt]
\end{tabular}
\begin{tabular}{cccc}
\includegraphics[width=0.3\textwidth]{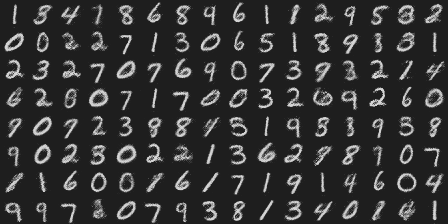} &
\includegraphics[width=0.3\textwidth, height = .2\textwidth ]{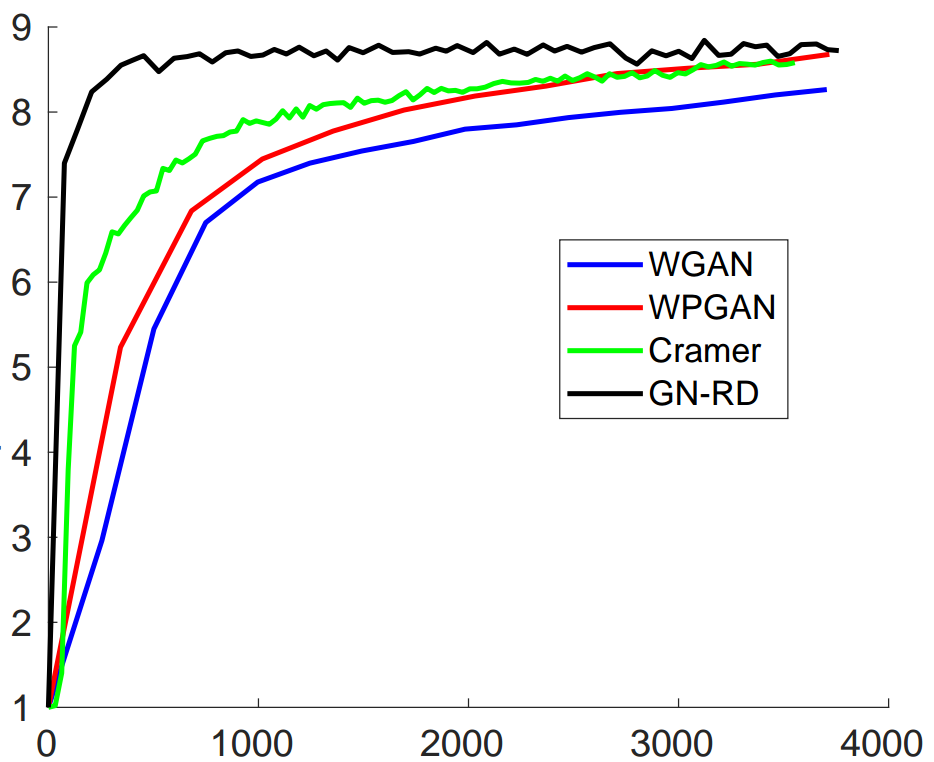} \\
\textbf{(d) GN-RD}  & \textbf{(e) Inception Score (\small{Over time in second })} \\[6pt]
\end{tabular}
\caption{Generating hand-written digits using MNIST dataset }
\label{fig:MNSIT}
 

\end{figure*}

Fig. \ref{fig:MNSIT-F} shows the result of using the proposed method for generating samples from fashion MNIST  dataset. The sample is generated only after 600 iterations ($\sim$ 10 minutes ) of the proposed method which shows that the GN-RD quickly converges and generates promising samples.

\begin{figure*}
\centering
\begin{tabular}{cccc}
\includegraphics[width=0.5\textwidth]{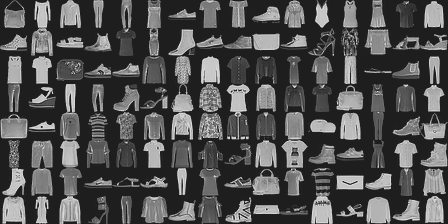} &
\includegraphics[width=0.5\textwidth]{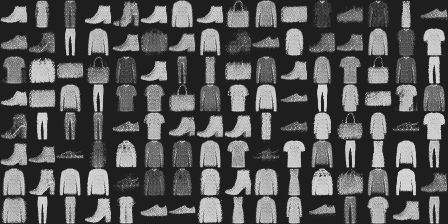} &
 \\
\textbf{(a) Original Data }  & \textbf{(b)GN-RD}  \\[6pt]
\end{tabular}
\caption{Generating fashion products using Fashion-MNIST dataset }
\label{fig:MNSIT-F}
 

\end{figure*}



\section{Conclusion}
Generative Adversarial Networks (GANs) have been able to learn the underlying distribution of the data and generate  samples from it. Training GANs is notoriously unstable  due to their non-convex min-max formulation. In this work, we propose the use of randomized discriminator to avoid facing the complexity of solving non-convex min-max problems. Evaluating the performance of the proposed method on real data set of MNIST and Fashion-MNIST shows the ability of the proposed method in generating promising samples without adversarial learning. 
\section*{Acknowledgement}
The  authors  would  like  to  thank  Mohammad Norouzi for his insightful feedback.
 
\bibliographystyle{IEEEtran}

\bibliography{references}

\end{document}